\newtheorem{theorem}{Theorem}
\newtheorem{lemma}[theorem]{Lemma}
\newtheorem{definition}[theorem]{Definition}
\newtheorem{assumption}{Assumption}
\newtheorem{remark}[theorem]{Remark}
\title{Evaluating Aleatoric Uncertainty\\ via
Conditional Generative Models
}
\author{%
  Ziyi Huang, Henry Lam, Haofeng Zhang
    \\
  Columbia University\\
  New York, NY 10027 \\
  \texttt{zh2354, khl2114, hz2553@columbia.edu} \\
}
\begin{document}

\maketitle

\begin{abstract}
Aleatoric uncertainty quantification seeks for distributional knowledge of random responses, which is important for reliability analysis and robustness improvement in machine learning applications. Previous research on aleatoric uncertainty estimation mainly targets closed-formed conditional densities or variances, which requires strong restrictions on the data distribution or dimensionality. To overcome these restrictions, we study conditional generative models for aleatoric uncertainty estimation. We introduce two metrics to measure the discrepancy between two conditional distributions that suit these models. Both metrics can be easily and unbiasedly computed via Monte Carlo simulation of the conditional generative models, thus facilitating their evaluation and training. We demonstrate numerically how our metrics provide correct measurements of conditional distributional discrepancies and can be used to train conditional models competitive against existing benchmarks.

\end{abstract}


\section{Introduction}\label{intro}

Uncertainty quantification plays a pivotal role in machine learning systems, especially for downstream decision-making tasks involving reliability analysis and optimization. There are two major types of uncertainty, \textit{aleatoric} uncertainty and \textit{epistemic} uncertainty. Aleatoric uncertainty refers to the intrinsic stochasticity of the output given a specific input \citep{hullermeier2021aleatoric}, while epistemic uncertainty accounts for the model uncertainty caused by data and modeling limitations \citep{kendall2017uncertainties}. Most classical machine learning algorithms that focus on mean response prediction primarily address epistemic uncertainty, but aleatoric uncertainty, which describes the distribution of responses beyond summary statistics like the mean, has been gaining importance because of risk and safety-critical considerations.

Existing approaches for aleatoric uncertainty estimation can be largely divided into the following directions: negative log-likelihood (NLL) loss-based estimation, forecaster calibration, and conditional density estimation (CDE). While powerful, these approaches are limited by several drawbacks arising from real-world applications:
\begin{enumerate}[leftmargin=*]
\item\emph{Negative Log-Likelihood Loss:} In regression tasks, aleatoric uncertainty can be estimated through the conditional mean and variance from models (heteroscedastic neural networks) optimized by the NLL loss \citep{nix1994estimating,bishop1994mixture,cawley2005estimating,lakshminarayanan2017simple,kendall2017uncertainties,cui2020calibrated}. However, this approach requires scalar-type output, which cannot be easily extended to broader computer vision applications, such as image generation. In addition, the computation of NLL loss relies on assumptions of conditional Gaussian or Gaussian-like distribution, which may not be followed by real-world datasets.
\item\emph{Forecaster Calibration:} In the calibration literature, aleatoric uncertainty estimators are also known as forecasters \citep{gneiting2007probabilistic,kuleshov2018accurate,song2019distribution} with multiple definitions of calibration modes  \citep{gneiting2007probabilistic,song2019distribution,fasiolo2021fast,kuleshov2018accurate,cui2020calibrated}. Under these definitions, the ground-truth conditional distribution function is well calibrated, but not vice versa. Thus, some intuitive sharpness criteria are typically applied to avoid trivial forecasters such as the unconditional distribution. However, little is known about how to recover the ground-truth conditional distribution function via calibration, even asymptotically.
\item\emph{Conditional Density Estimation:} In CDE-based approaches \citep{holmes2012fast,izbicki2017photo,sugiyama2010conditional,pospisil2018rfcde,pospisil2019f,dalmasso2020conditional,dutordoir2018gaussian}, aleatoric uncertainty is directly calculated by estimating conditional densities in a certain form (such as kernel density). Most of CDE methods can only apply on low-dimensional responses following absolutely continuous conditional distributions. Moreover, the output of CDE methods is an explicit formula of the conditional density function. Thus, numerical characteristics such as conditional quantiles may be hard to obtain, as it involves numerical integration that is generally difficult to implement, especially in higher-dimensional settings. 
\end{enumerate}
To address the above challenges, we study a framework using conditional generative models to estimate aleatoric uncertainty. Compared to previous approaches, conditional generative models \citep{mirza2014conditional,ren2016conditional} are more scalable and flexibly applicable regardless of the dimension and distribution of the input/output vector. Moreover, they can easily generate numerical characteristics of the underlying distributions or other performance estimations through Monte Carlo methods. 

At the core of our framework is the construction of distance metrics between the generative model and the ground-truth distribution, which is required for both model evaluation and training \citep{goodfellow2014generative,nowozin2016f,arjovsky2017wasserstein}. In particular, we generalize the maximum mean discrepancy (MMD) \citep{gretton2012kernel,li2015generative} to the setting of conditional distributions, by constructing two new metrics that we call joint maximum mean discrepancy (JMMD) and average maximum mean discrepancy (AMMD). We derive statistical properties in estimating these metrics and illustrate that both metrics admit easy-to-implement and computationally scalable unbiased estimators. Based on these, we further develop two approaches to optimize conditional generative models suited for different tasks and conduct comprehensive experiments to show the correctness and effectiveness of our framework.

Our approach has the following strengths relative to previous methods: 
1) A similar study with conditional MMD can be found in \citep{ren2016conditional} which, as far as we know, is the most relevant work on MMD-based conditional generative models. However, their framework involves unrealistic technical assumptions that may not hold in practice, as well as matrix inversion operations that suffer from instability and scalability issues (see Section \ref{sec:CMMD}). 2) Both JMMD and AMMD are evaluation metrics that are desirably "distribution-free" (i.e., the data are not assumed any particular type of distributions) and "model-free" (i.e., the evaluation does not involve additional estimated models such as the discriminator). In previous research, Fr\'{e}chet Inception Distance (FID) \citep{heusel2017gans,lucic2018gans} is a standard metric to assess the quality of unconditional generative models. However, the closed-form computation of FID assumes that both generative models and data follow multivariate Gaussian distributions. Another commonly used evaluation approach is Indirect Sampling Likelihood (ISL) \citep{breuleux2011quickly,goodfellow2014generative}, which computes the NLL under a fitted kernel density based on generative models. However, kernel density estimation deteriorates in quality when dimensionality increases and could fit poorly into the generative models. Finally, the value of loss on testing data is an alternative for performance examination. However, typical losses such as using $f$-divergence or Wasserstein distance cannot indicate the performance of the generator alone (see Section \ref{related}). 
\section{Related Work}\label{related}

\textbf{Learning and Evaluation Criterion.} Evaluation criteria on generative models against data are typically borrowed from discrepancy measures between two probability distributions in the statistics literature. The latter includes two major types: $f$-divergence and integral probability metrics. The seminal paper \citep{goodfellow2014generative} used Jensen-Shannon divergence in its original form and then \citep{nowozin2016f} extended it to general $f$-divergence motivated from the benefits of other divergence function choices. The computation of integral probability metrics have two important sub-directions, MMD \citep{li2015generative,li2017mmd} and Wasserstein distance \citep{arjovsky2017wasserstein,gulrajani2017improved}. Among these criteria, a discriminator is typically needed for approaches with $f$-divergence (variational representation) and Wasserstein distance (dual representation), while it is not required for MMD methods. The loss function from $f$-divergence and Wasserstein distance cannot be directly use to evaluate generative models alone due to their dependency on the quality of discriminators. In addition, other conditional distance measures may encounter challenges when using generative models. For instance, NLL value \citep{lakshminarayanan2017simple} and CDE value \citep{dalmasso2020conditional} require an explicit form of the model's density function. 


\textbf{Aleatoric Uncertainty in Deep Learning.} Besides the directions discussed in Section \ref{intro}, aleatoric uncertainty on classification tasks can be estimated from the output of softmax layers in deep neural networks \citep{niculescu2005predicting,lakshminarayanan2017simple,hendrycks2016baseline}. Previous research \citep{guo2017calibration} pointed out that directly using softmax outputs for estimation could be inaccurate, as the softmax probability on predicted class did not reflect the ground-truth correctness likelihood. The ground-truth conditional mass function has zero calibration error but not vice versa. Hence forecasters with zero calibration error, which have been studied extensively \citep{kumar2019verified,minderer2021revisiting,kull2019beyond}, are not sufficient to recover the ground-truth conditional mass function. The forecasters could be heuristically improved by a second-level metric named sharpness (or refinement error) \citep{kumar2019verified,kuleshov2015calibrated,kull2015novel}. Since aleatoric uncertainty in classification can be captured by vector-valued maps such as softmax responses, it is not necessary to use a conditional generative model for this task, and thus we do not focus on classification in this paper.

\section{Conditional Generative Models and Maximum Mean Discrepancy}

\subsection{Conditional Generative Models}
In this section, we provide rigorous definitions on conditional generative models. 
Consider a standard statistical framework where a pair of random vectors $(X, Y) \in \mathcal{X} \times \mathcal{Y}$ follows a joint distribution $P_{X,Y}$ with marginal distributions $X \sim P_X$ and $Y \sim P_Y$. We assume the space $\mathcal{X} \subset \mathbb{R}^d$ with $d \ge 1$ which is allowed to contain either continuous or discrete components. Denote the conditional distribution of $Y$ given $X$ by $P_{Y|X}$. For a given value $x$ of $X$, denote the conditional distribution as $P_{Y|X=x}$. Typically, we regard $X$ as a vector of input (example) and $Y$ as a vector of output (label). For instance, $Y \subset \mathbb{R}^q$ with $q \ge 1$ in regression and $Y \subset [K]:=\{1,\ldots,K\}$ in classification. 
Alternatively, in image generation tasks, $X$ refers to auxiliary information (such as the image attributes or labels) and $Y$ refers to the image in order to keep the notation consistent.

Our goal is to quantify the conditional distribution $P_{Y|X}$ via conditional generative models. More precisely, let $\xi \in \mathbb{R}^m$ be a random vector independent of $X$ with a known distribution $P_{\xi}$ (specified by the learner) and the goal is to construct a function $G: \mathbb{R}^m \times \mathcal{X} \to \mathcal{Y}$ such that the conditional distribution of
$G(\xi, X)|X=x$ is the same as $P_{Y|X=x}$. The following lemma demonstrates the existence of such function $G$, termed as the conditional generative model $G: \mathbb{R}^m \times \mathcal{X} \to \mathcal{Y}$.

\begin{lemma} [Adapted from Theorem 5.10 in \citep{kallenberg1997foundations}]
Let $(X, Y)$ be a random pair taking values in
$\mathcal{X} \times \mathcal{Y}$ with joint distribution $P_{X,Y}$. Suppose $Y$ is a standard Borel space. Then there exist a random vector $\xi \sim P_{\xi} = \text{Uniform}([0, 1]^m)$ 
and a Borel-measurable function
$G: \mathbb{R}^m \times \mathcal{X} \to \mathcal{Y}$ such that $\xi$ is independent of $X$ and
$(X, Y) = (X, G(\xi, X))$ almost everywhere. In particular, such $G$ satisfies that $Y|X=x \sim G(\xi, X)|X=x$ for a.e. $x$ with respect to $P_X$.\label{noise}
\end{lemma}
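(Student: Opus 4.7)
The plan is to build $G$ as the composition of a Borel isomorphism and a conditional quantile function, leveraging the existence of regular conditional distributions on standard Borel spaces. Since $\mathcal{Y}$ is standard Borel, there is a Borel isomorphism $\phi:\mathcal{Y}\to B\subseteq[0,1]$, so I first reduce to the case $Y\in[0,1]$; since $[0,1]^m$ and $[0,1]$ are Borel isomorphic, I also assume $m=1$ and take $\xi$ uniform on $[0,1]$. These reductions are cosmetic and let me work with scalar CDFs.

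Next, because $Y$ is standard Borel, a regular conditional distribution $\mu_x(\cdot)=P_{Y\mid X=x}$ exists, i.e., a Markov kernel $x\mapsto\mu_x$ that is jointly measurable in $(A,x)$. I would then form the conditional CDF $F(y\mid x):=\mu_x((-\infty,y])$ after transporting to $[0,1]$ via $\phi$, and define its generalized inverse $F^{-1}(u\mid x):=\inf\{y\in[0,1]:F(y\mid x)\ge u\}$, setting $G(u,x):=\phi^{-1}(F^{-1}(u\mid x))$. Joint measurability of $F$ in $(y,x)$ follows from the Markov-kernel property, and right-continuity of $F(\cdot\mid x)$ gives joint measurability of $F^{-1}$ via the identity $\{F^{-1}(u\mid x)\le y\}=\{F(y\mid x)\ge u\}$. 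By the probability integral transform, if $U\sim\mathrm{Uniform}[0,1]$ then $F^{-1}(U\mid x)\sim\mu_x$ for each $x$, so $G(U,x)\sim P_{Y\mid X=x}$.

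Finally, to obtain the a.e.\ coupling $(X,Y)=(X,G(\xi,X))$, I would enlarge the probability space to $(\Omega\times[0,1],\mathcal{F}\otimes\mathcal{B},P\otimes\mathrm{Leb})$ and let $\xi$ be the $[0,1]$-coordinate; by construction $\xi$ is uniform and independent of $X$. A Fubini calculation then shows that $(X,G(\xi,X))$ matches $(X,Y)$ in joint law: for any bounded Borel $f$,
\[
\mathbb{E}[f(X,G(\xi,X))]=\int\!\!\int_0^1 f(x,G(u,x))\,du\,P_X(dx)=\int\!\!\int f(x,y)\,\mu_x(dy)\,P_X(dx)=\mathbb{E}[f(X,Y)].
\]
A standard transfer argument from \citep{kallenberg1997foundations} then upgrades this distributional equality to an a.e.\ coupling on the enlarged space.

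The main obstacle I anticipate is twofold: (i) rigorously establishing joint measurability of $F$ and hence $F^{-1}$ in $(u,x)$ from the Markov-kernel property, and (ii) carefully handling the enlargement of the probability space so that one truly obtains $(X,Y)=(X,G(\xi,X))$ almost everywhere rather than mere equality in law. The reductions via Borel isomorphisms (both $\mathcal{Y}\cong B\subseteq[0,1]$ and $[0,1]^m\cong[0,1]$) and the use of regular conditional distributions are standard, but they must be assembled in a consistent order for the measurability of $G$ to go through.
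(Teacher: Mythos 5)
The paper does not actually prove this lemma; it is imported wholesale as an adaptation of Kallenberg's transfer theorem (Theorem 5.10 in the cited edition), so your proposal is a from-scratch reconstruction rather than something to be matched line-by-line against an in-paper argument. Your construction of $G$ is the standard one and is essentially correct for the \emph{distributional} part of the claim: reduce to $\mathcal{Y}\subseteq[0,1]$ and $m=1$ via Borel isomorphisms, take a regular conditional distribution $\mu_x$ (which exists because $\mathcal{Y}$ is standard Borel), and set $G(u,x)=\phi^{-1}(F^{-1}(u\mid x))$. The joint measurability of $F$ and $F^{-1}$ goes through as you say, and your Fubini computation correctly gives $(X,G(\xi,X))\overset{d}{=}(X,Y)$, hence $G(\xi,X)\mid X=x\sim P_{Y\mid X=x}$ for $P_X$-a.e.\ $x$. (One small point worth a sentence: $F^{-1}(u\mid x)$ a priori lands in the support of $\phi_*\mu_x$, which need not be contained in $B=\phi(\mathcal{Y})$; since $\phi_*\mu_x(B)=1$ this only fails on a $du$-null set for each $x$, so $G$ must be redefined on a null set before applying $\phi^{-1}$.)

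The genuine gap is in the last step, the almost-everywhere coupling $(X,Y)=(X,G(\xi,X))$. The $\xi$ you construct --- the $[0,1]$-coordinate on $\Omega\times[0,1]$ under $P\otimes\mathrm{Leb}$ --- is independent of \emph{everything} on $\Omega$, including $Y$, so $G(\xi,X)$ can never equal $Y$ a.s.\ unless $Y$ is a function of $X$ (take $X$ trivial and $Y$ uniform: then $G(u)=u$ and $\xi\perp Y$ forces $P(\xi=Y)=0$). Equality in law does not ``upgrade'' to a.s.\ equality for that particular $\xi$; the correct $\xi$ must be manufactured \emph{from} $(X,Y)$ together with auxiliary randomization. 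Concretely, with $V\sim\mathrm{Uniform}[0,1]$ independent of $(X,Y)$, set $\xi:=F(Y^-\mid X)+V\bigl(F(Y\mid X)-F(Y^-\mid X)\bigr)$ (the randomized probability integral transform); then $\xi$ is uniform and independent of $X$, and $F^{-1}(\xi\mid X)=Y$ a.s. Alternatively one can apply the transfer theorem in the reverse direction --- transferring the noise $\vartheta$ from the product-space pair $(X,G(\vartheta,X))$ back onto the original pair $(X,Y)$ --- but invoking ``a standard transfer argument from Kallenberg'' at exactly this point is close to circular here, since the transfer theorem is precisely the result this lemma is adapted from. Either way, the coupling step needs an explicit construction rather than the independent-coordinate $\xi$ you currently propose.
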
 

The conditional generative model can provide more information than standard regression models with single-point prediction. In regression problems, the conditional mean can be estimated by taking the sample mean of multiple draws from $G(\xi_i, X)|X=x$.  
Meanwhile, other numerical characteristics of the underlying target distribution, such as conditional variance and conditional quantile can also be estimated by Monte Carlo sampling from the conditional generative model, beyond what single-point prediction could offer. 


In the rest of this paper, we use $P_{Y|X}$ for the ground-truth conditional distribution and $Q_{Y|X}$ for the distribution of the conditional generative model $G(\xi, X)|X$. We denote $Q_{X,G(\xi, X)}$ as the joint distribution of $(X,G(\xi, X))$. For each given $x$, the generative model is able to generate conditionally independent and identically distributed (i.i.d.) samples $G(\xi_{i}, x)$ from the conditional distribution $Q_{Y|X=x}$. We parametrize the conditional generative model in a hypothesis class $\{G_\theta(\xi, X):\theta\in\Theta\}$ with parameter $\theta$. 
To learn $G(\xi, X)|X$ as an estimate of $P_{Y|X}$, we need a metric to quantify the difference between $G(\xi, X)|X$ and $P_{Y|X}$ using finite training data, which relates to Two-Sample Test. To this end, we will use the (kernel) maximum mean discrepancy (MMD) \citep{gretton2012kernel}, which is described in the next subsection.

\subsection{Two-Sample Test via Maximum Mean Discrepancy}

We review the standard MMD in the setting of unconditional distribution on $\mathcal{Y}$. Section A
provides preliminaries on the reproducing kernel Hilbert space (RKHS). Suppose that $\mathcal{F}_X$ ($\mathcal{F}_Y$) is the RKHS defined on the space $\mathcal{X}$ ($\mathcal{Y}$) with kernel $k_1$ ($k_2$) and feature map $\phi_1$ ($\phi_2$). We adopt the following two basic assumptions throughout this paper (i.e., all theorems make these assumptions without explicit mentioning). Detailed explanations on Assumptions \ref{ass1} and \ref{ass2} can be found in Section A. The Gaussian kernels for instance satisfy both assumptions.

\begin{assumption} \label{ass1}
We assume the following: 1) $k_1(\cdot, \cdot)$ is measurable and $\mathbb{E}_{x\sim P_X}[k_1(x, x)]<\infty$. 2) $k_2(\cdot, \cdot)$ is measurable and $\mathbb{E}_{y \sim P_Y}[k_2(y, y)]<\infty$. Moreover, $\mathbb{E}_{y \sim P_{Y|X=x}}[k_2(y, y)|X=x]<\infty$ for any $x\in \mathcal{X}$.
In addition, these assumptions also hold when replacing the data distribution $P$ by the generative distribution $Q$. 
\end{assumption}

\begin{assumption} \label{ass2}
We assume the following: 1) $k_1$ is characteristic. 2) $k_2$ is characteristic. 3) $k_1\otimes k_2$ is characteristic. 
\end{assumption}

The integral probability metric aims to measure the discrepancy between two distributions. Let $\mathcal{G}$ denote a set of functions $\mathcal{Y} \to \mathbb{R}$. Given two distributions $P_Y$ and $Q_Y$ on $\mathcal{Y}$, the integral probability metric is defined as
$$IPM(P_Y, Q_Y) = \sup_{f\in \mathcal{G}} | \mathbb{E}[f(Y)]-
\mathbb{E}[f(\hat{Y})]|$$
where $Y\sim P_Y$ and $\hat{Y} \sim Q_Y$. MMD is a special case of integral probability metrics, as it chooses $\mathcal{G}$ to be the unit ball in the RKHS $\mathcal{F}_Y$.
Let $\mu_{P_Y}$ denote the kernel mean embedding of $P_Y$ in $\mathcal{F}_Y$: $\mu_{P_Y}:= \mathbb{E}_{y\sim P_Y} [\phi_2(y)]$. $\mu_{P_Y}$ is guaranteed to be an element in the RKHS $\mathcal{F}_Y$ under Assumption \ref{ass1} \citep{gretton2012kernel}. With these discussions, the square of MMD distance between $P_Y$ and $Q_Y$ is formally defined as
\begin{align}
&MMD^2(P_Y, Q_Y) = \sup_{f\in \mathcal{F}_Y, \|f\|_{\mathcal{F}_Y}\le 1} | \mathbb{E}[f(Y_1)]-
\mathbb{E}[f(\hat{Y}_1)]|^2 \nonumber\\
=& \|\mu_{P_Y}-\mu_{Q_Y}\|^2_{\mathcal{F}_Y} = \mathbb{E}[k_2(Y_1, Y_2)]- 2\mathbb{E}[k_2(Y_1, \hat{Y}_1)] + \mathbb{E}[k_2(\hat{Y}_1, \hat{Y}_2)]     \label{equ:MMD}
\end{align}
where $Y_1, Y_2\stackrel{i.i.d.}{\sim} P_Y$ and $\hat{Y}_1, \hat{Y}_2\stackrel{i.i.d.}{\sim} Q_Y$.

\begin{theorem}[\citep{gretton2012kernel}]
$MMD^2(P_Y, Q_Y)\ge 0$ and $MMD^2(P_Y, Q_Y)= 0$ if and only if $P_Y= Q_Y$.
\end{theorem}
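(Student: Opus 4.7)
The plan is to unpack the squared-norm formulation of $MMD^2$ and then invoke the characteristic-kernel hypothesis from Assumption \ref{ass2}.

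First I would establish non-negativity. By the identity in \eqref{equ:MMD},
\[
MMD^2(P_Y, Q_Y) = \|\mu_{P_Y}-\mu_{Q_Y}\|^2_{\mathcal{F}_Y},
\]
and this is a squared norm in the RKHS $\mathcal{F}_Y$, so it is automatically $\ge 0$. Assumption \ref{ass1}(2) is what ensures $\mu_{P_Y}$ and $\mu_{Q_Y}$ are well-defined elements of $\mathcal{F}_Y$ (via the Bochner integral existence criterion $\mathbb{E}[\sqrt{k_2(Y,Y)}]<\infty$, which follows from $\mathbb{E}[k_2(Y,Y)]<\infty$ and Jensen's inequality), so the RKHS-norm expression makes sense.

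Next I would handle the easy direction: if $P_Y = Q_Y$, then for every $f\in\mathcal{F}_Y$, $\mathbb{E}_{Y\sim P_Y}[f(Y)]=\mathbb{E}_{\hat{Y}\sim Q_Y}[f(\hat{Y})]$, so by the reproducing property $\langle \mu_{P_Y}-\mu_{Q_Y},f\rangle_{\mathcal{F}_Y}=0$ for all $f$, forcing $\mu_{P_Y}=\mu_{Q_Y}$ and hence $MMD^2=0$.

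The substantive direction is: $MMD^2(P_Y,Q_Y)=0 \Rightarrow P_Y = Q_Y$. From the squared-norm expression, $MMD^2=0$ is equivalent to $\mu_{P_Y}=\mu_{Q_Y}$ as elements of $\mathcal{F}_Y$. Here I would directly invoke the definition of a characteristic kernel: by Assumption \ref{ass2}(2), $k_2$ is characteristic, which by definition means the kernel mean embedding map $P \mapsto \mu_P$ is injective on the space of Borel probability measures on $\mathcal{Y}$. Therefore $\mu_{P_Y}=\mu_{Q_Y}$ implies $P_Y=Q_Y$.

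The main obstacle would ordinarily be proving that characteristic kernels induce an injective mean embedding, but since this is precisely the content of Assumption \ref{ass2} (and is stated as the definition of ``characteristic'' in the RKHS preliminaries referenced in Section A), the argument reduces to a short chain of equivalences. The remaining subtlety is just making sure the expansion of $\|\mu_{P_Y}-\mu_{Q_Y}\|^2_{\mathcal{F}_Y}$ into the three-term expectation form in \eqref{equ:MMD} is justified by Fubini and the reproducing property, both of which are available under Assumption \ref{ass1}.
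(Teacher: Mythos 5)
Your proof is correct and follows exactly the argument the paper relies on: the paper does not reprove this result but cites it from Gretton et al., and its Section A explicitly notes that a characteristic kernel (Assumption \ref{ass2}) is precisely what makes the mean embedding injective and hence the MMD a well-posed metric, which is the substance of your forward direction. The non-negativity via the squared RKHS norm and the well-definedness of $\mu_{P_Y},\mu_{Q_Y}$ under Assumption \ref{ass1} are likewise the standard justifications the paper presupposes, so there is nothing to add.
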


Suppose we have data $y_i\stackrel{i.i.d.}{\sim} P_{Y}$ ($i\in [n]$) and $\hat{y}_{j}\stackrel{i.i.d.}{\sim} Q_{Y}$ ($j\in[m]$). Then a standard unbiased estimator of $MMD^2(P_Y, Q_Y)$ \citep{gretton2012kernel} is
{\small $$
\mathcal{L}^u_{MMD^2} = \frac{1}{n(n-1)} \sum_{i=1}^n \sum_{i'\ne i, i'=1}^n k_2(y_i , y_{i'}) - \frac{2}{nm} \sum_{i=1}^n \sum_{j=1}^m k_2(y_i , \hat{y}_j) + \frac{1}{m(m-1)} \sum_{j=1}^m \sum_{j'\ne j, j'=1}^m k_2(\hat{y}_j, \hat{y}_{j'})$$}



\section{Generalization to Conditional Two-Sample Test} \label{sec:MMD}

In this section, we generalize MMD to conditional Two-Sample Test. We first explain the limitation of conditional maximum mean discrepancy (CMMD), the state-of-the-art approach to use MMD for conditional models \citep{ren2016conditional}. Then, we introduce two metrics, JMMD and AMMD, which bypass the limitations of CMMD on strong restrictions and biased estimation. We also present the connections and comparisons among these metrics, and describe how to use them to construct conditional generative models.



\subsection{Previous Work on Conditional Maximum Mean Discrepancy} \label{sec:CMMD}
In \citep{ren2016conditional}, conditional generative moment-matching networks (CGMMNs) were developed for conditional distribution generation. In particular, they leveraged previous work on conditional mean embeddings of the conditional distribution $C_{P_{Y|X}}$ \citep{song2009hilbert,fukumizu2013kernel,song2013kernel,muandet2017kernel} (Section A 
provides a review on this topic.) They used the discrepancy between $C_{P_{Y|X=x}}$ and $C_{Q_{Y|X=x}}$ to measure the difference of two conditional distributions, termed as CMMD, which is defined formally as: 
\begin{equation} \label{equ:CMMD0}
CMMD^2 = \|C_{P_{Y|X}} - C_{Q_{Y|X}}\|^2_{\mathcal{F}_X \otimes \mathcal{F}_Y}    
\end{equation}
where $P$ represents the ground-truth data distribution and $Q$ represents the generative distribution.
The estimator of $CMMD^2$ developed in \citep{ren2016conditional} is as follows:
\begin{equation}  \label{equ:CMMD}
\mathcal{L}_{C^2}(P,Q) = \|\tilde{C}_{P_{Y|X}} - \tilde{C}_{Q_{Y|X}}\|^2_{\mathcal{F}_X \otimes \mathcal{F}_Y},
\end{equation}
\begin{equation} \label{equ:cyx}
\tilde{C}_{P_{Y|X}}= \tilde{C}_{P_{YX}}(\tilde{C}_{P_{XX}}+\lambda I)^{-1}=\Phi_2(K_X + \lambda n I)^{-1} \Phi_1 \footnotemark
\footnotetext{The sample size $n$ should appear in the formula \citep{muandet2017kernel} but seems missing in the paper \citep{ren2016conditional}.}
\end{equation}
where $\Phi_2 = (\phi_2(y_1), ..., \phi_2(y_n))$, $\Phi_1 =
(\phi_1(x_1), ..., \phi_1(x_n))$, $K_X = \Phi_1^\top\Phi_1$, and $I$ is the identity matrix. 

While \citep{ren2016conditional} is the most relevant study to our problem setting, directly applying CMMD on aleatoric uncertainty estimation has following limitations: 

\textbf{Computationally Expensive:} The matrix inverse in the estimator is computationally expensive for practical implementation. The running time for a single inversion in one iteration is at the order of $O(B^3)$, where $B$ is the batch size. Meanwhile, the batch size should be sufficient large to achieve good performance for generative models \citep{li2015generative}. 

\textbf{Strong Technical Assumptions and Existence of Inversion:} 1) The existence of the conditional mean embedding operator $C_{P_{Y|X}}$ typically requires strong assumptions: $\forall g\in\mathcal{F}_Y$, $\mathbb{E}_{P_{Y|X}} [g(Y )|X] \in \mathcal{F}_X$. This assumption is not necessarily true for continuous domains 
\citep{song2009hilbert}, and simple counterexamples using the Gaussian kernel can be found \citep{fukumizu2013kernel}. 
2) In general, $\tilde{C}^{-1}_{XX}$ does not exist when $\mathcal{F}_X$ is infinite dimensional, since $\tilde{C}_{XX}$ is a compact operator and thus has an arbitrary small positive eigenvalue \citep{muandet2017kernel}. When the matrix is singular, the matrix inversion could be unstable and the performance of the estimator $\tilde{C}_{P_{Y|X}}$ in \citep{ren2016conditional} might be degraded after adding $\lambda I$ to avoid the singularity. 
3) Even though the first two points could be relieved in some sense \citep{park2020measure}, the CMMD metric \eqref{equ:CMMD0} is well-defined only if $C_{P_{Y|X}}, C_{Q_{Y|X}}\in \mathcal{F}_X \otimes \mathcal{F}_Y$. However, this requires a much stronger assumption than the existence of $\tilde{C}^{-1}_{XX}$ (See Assumption 6 in Section A).

\textbf{Bias:} CMMD does not admit any obvious unbiased estimator. The estimator $\tilde{C}_{P_{Y|X}}$ in \eqref{equ:cyx} is biased, even in the asymptotic sense if $\lambda$ is fixed \citep{song2009hilbert}.


To bypass the above limitations, we propose two alternative metrics which only require basic assumptions on the existence of the cross-covariance operator and the characteristic property of the kernels (Assumptions \ref{ass1} and \ref{ass2}). In particular, we do not require the existence of the inversion of any operator or matrix, which makes our metrics easily implemented for real-world applications.

\subsection{Average Maximum Mean Discrepancy (AMMD)}
We first introduce a rather straightforward approach, which we term the AMMD metric. AMMD shows better potential for multi-output problems (such as image generation) where data consists of i.i.d. inputs $x_i$ with conditionally independent outputs $y_{i,j}$ at each $x_i$; see Section C for a more detailed discussion. In AMMD, at each $x$, we use \eqref{equ:MMD} to build unbiased estimators of the MMD on the conditional distribution of $Y|X=x$. Then, these estimators are averaged with respect to the marginal $P_X$. 
More specifically, we define
$$AMMD^2(P, Q)=\mathbb{E}_{x \sim P_X}[MMD_{X=x}^2(P_{Y|X=x}, Q_{Y|X=x})]$$
where
\begin{eqnarray}
&&MMD_{X=x}^2(P_{Y|X=x}, Q_{Y|X=x}) := \|\mu_{P_{Y|X=x}}-\mu_{Q_{Y|X=x}}\|^2_{\mathcal{F}_Y}\nonumber\\
&= &\mathbb{E}[k_2(Y^{x}_1, Y^{x}_2)|X=x]- 2\mathbb{E}[k_2(Y^{x}_1, \hat{Y}^{x}_1)|X=x] + \mathbb{E}[k_2(\hat{Y}^{x}_1, \hat{Y}^{x}_2)|X=x] \label{equ:MMDX}     
\end{eqnarray}
is a function of $x$ and for fixed $x$, $Y^{x}_1, Y^{x}_2\stackrel{i.i.d.}{\sim} P_{Y|X=x}$, and $\hat{Y}^{x}_1, \hat{Y}^{x}_2\stackrel{i.i.d.}\sim Q_{Y|X=x}$. Note that $\mu_{P_{Y|X=x}}$, $\mu_{Q_{Y|X=x}}\in \mathcal{F}_Y$ guaranteed by Assumption \ref{ass1} so $MMD_{X=x}^2$ is well-defined. Hence $AMMD^2(P, Q)$ is also well-defined being the expectation with respect to non-negative measurable functions. 

\begin{remark}
In \eqref{equ:MMDX}, $Y^{x}_1, Y^{x}_2$ are drawn in a \textbf{conditionally independent} manner for each $x$. This is not equivalent to globally draw two unconditionally independent samples $Y_1, Y_2$ and consider $Y_1|X=x, Y_2|X=x$ for each $x$ because the latter is not conditionally independent in general. Therefore, we have that in general,
$\mathbb{E}_{x \sim P_X}[\mathbb{E}[k_2(Y^{x}_1, Y^{x}_2)|X=x]]\ne \mathbb{E}[k_2(Y_1, Y_2)].$
\end{remark}

\begin{theorem} \label{thm:AMMD}
$AMMD^2(P, Q)\ge 0$ and $AMMD^2(P, Q)= 0$ if and only if for a.e. $x$ with respect to $P_X$, $P_{Y|X=x}= Q_{Y|X=x}$.
\end{theorem}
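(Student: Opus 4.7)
The plan is to reduce the statement to the pointwise Two-Sample Test theorem (Theorem 2) by a straightforward integration argument, after taking care of the measurability of the map $x \mapsto MMD_{X=x}^2$.

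First I would observe that non-negativity is immediate: for each fixed $x$, $MMD_{X=x}^2(P_{Y|X=x}, Q_{Y|X=x}) = \|\mu_{P_{Y|X=x}} - \mu_{Q_{Y|X=x}}\|^2_{\mathcal{F}_Y}$ is a squared norm in $\mathcal{F}_Y$, hence non-negative. Taking expectation with respect to $P_X$ preserves non-negativity, giving $AMMD^2(P,Q)\ge 0$.

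For the equivalence, I would argue that $x \mapsto MMD_{X=x}^2$ is a measurable non-negative function of $x$, which makes the expectation well defined. Measurability follows by expanding the squared norm as in \eqref{equ:MMDX} into three conditional expectations of the form $\mathbb{E}[k_2(\cdot,\cdot)|X=x]$; under Assumption \ref{ass1} these conditional expectations exist as measurable versions via regular conditional distributions, and so does their sum. Once this is established, the easy direction ($\Leftarrow$) follows directly: if $P_{Y|X=x}=Q_{Y|X=x}$ for $P_X$-a.e. $x$, then by Theorem 2 applied on $\mathcal{F}_Y$ (with the characteristic kernel $k_2$ from Assumption \ref{ass2}), $MMD_{X=x}^2=0$ for $P_X$-a.e. $x$, so its expectation is $0$.

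For the converse ($\Rightarrow$), suppose $AMMD^2(P,Q)=0$. Since the integrand $MMD_{X=x}^2$ is non-negative and $P_X$-integrable with zero integral, a standard measure-theoretic result gives $MMD_{X=x}^2=0$ for $P_X$-a.e. $x$. Applying Theorem 2 again pointwise on the characteristic RKHS $\mathcal{F}_Y$ yields $\mu_{P_{Y|X=x}}=\mu_{Q_{Y|X=x}}$, hence $P_{Y|X=x}=Q_{Y|X=x}$, for $P_X$-a.e. $x$, completing the argument.

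The only real technical point, and what I expect to be the main (though minor) obstacle, is the measurability of $x \mapsto MMD_{X=x}^2$; one must choose consistent measurable versions of the three conditional-expectation terms in \eqref{equ:MMDX}, which is a routine application of the existence of regular conditional distributions together with Fubini/Tonelli under Assumption \ref{ass1}. Everything else reduces to invoking Theorem 2 at each $x$ and elementary properties of Lebesgue integration of non-negative measurable functions.
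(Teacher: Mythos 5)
Your proposal is correct and follows essentially the same route as the paper's proof: non-negativity from the squared RKHS norm, the zero-integral-of-a-non-negative-function argument to localize to $P_X$-a.e.\ $x$, and the characteristic property of $k_2$ (Assumption \ref{ass2}) to conclude $P_{Y|X=x}=Q_{Y|X=x}$ pointwise. Your additional attention to the measurability of $x\mapsto MMD_{X=x}^2$ is a point the paper only addresses in passing (it asserts well-definedness in the main text just before the theorem), but it does not change the substance of the argument.
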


Theorem \ref{thm:AMMD} shows that $AMMD^2(P, Q)$ offers a metric to measure $P_{Y|X}$ versus $Q_{Y|X}$. Next, we propose a Monte Carlo estimator of $AMMD^2$ for conditional generative models: 
\begin{enumerate}[leftmargin=*]
\item Take a batch $\{(x_i,y_{i,l}):i\in [n], l\in [r]\}$ from $P$ of batch size $rn$ where $y_{i,l}$ ($l\in [r]$) are the outputs at the same $x_i$. Here, $r$ is restricted by the specification of the task: $r=1$ in single-output problems but can be greater than $1$ in multi-output problems such as image generation; $n\ge 1$.
\item Generate a batch $\{(x_i,G(\xi_{i,j},x_{i})):i\in [n], j\in [m]\}$ from $Q$ of batch size $mn$ where $\xi_{1,1},\ldots, \xi_{n,m}$ are i.i.d. and independent of $x_1,\ldots,x_n$; $m\ge 2$. 
\item Compute 
{ \begin{align}
\hat{A^2}(P, Q)=&\frac{1}{n}\sum_{i=1}^{n} \Big( - \frac{2}{mr} \sum_{j=1}^m \sum_{l=1}^r k_2(y_{i,l}, G(\xi_{i,j}, x_i))\nonumber\\
&+ \frac{1}{m(m-1)} \sum_{j=1}^m \sum_{j'\ne j, j'=1}^m k_2(G(\xi_{i,j},x_i),G(\xi_{i,j'},x_i)) \Big)\label{equ:lossA}
\end{align}}
\end{enumerate}


The next theorem establishes the error analysis of the estimator $\hat{A^2}(P, Q)$. 
\begin{theorem} \label{thm:AMMD2}
$\hat{A^2}(P, Q)$ is an unbiased estimator of $AMMD^2(P, Q)-C_0$ where $C_0$ is a constant independent of $Q$ given by $C_0=\mathbb{E}_{x \sim P_X}[\mathbb{E}[k_2(Y^{x}_1, Y^{x}_2)|X=x]]$. Moreover, the variance of $\hat{A^2}(P, Q)$ is $O(\frac{1}{n\min\{m,r\}})+\frac{1}{n}K_0$ where $K_0=\text{Var}_{x\sim P_X}[- 2\mathbb{E}[k_2(Y^{x}_1, \hat{Y}^{x}_1)|X=x] + \mathbb{E}[k_2(\hat{Y}^{x}_1, \hat{Y}^{x}_2)|X=x]]$ is independent of $n,m,r$. 
(The explicit formula of the variance is given in Section B.)
\end{theorem}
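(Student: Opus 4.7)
The plan is to exploit the i.i.d.\ block structure across the outer index $i$. Each block $B_i := (x_i, (y_{i,l})_{l=1}^r, (\xi_{i,j})_{j=1}^m)$ is i.i.d.\ across $i$ by construction, so writing $T_i$ for the inner summand of \eqref{equ:lossA}, we have $\hat{A^2}(P,Q) = \frac{1}{n}\sum_{i=1}^n T_i$ with the $T_i$ i.i.d. This immediately gives $\mathbb{E}[\hat{A^2}(P,Q)] = \mathbb{E}[T_1]$ and $\mathrm{Var}(\hat{A^2}(P,Q)) = \frac{1}{n}\mathrm{Var}(T_1)$, reducing both claims to a single-block analysis.

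For unbiasedness, I would condition on $x_1$. Inside a block, $y_{1,1},\dots,y_{1,r}$ are conditionally i.i.d.\ from $P_{Y|X=x_1}$ and, independently, $G(\xi_{1,1},x_1),\dots,G(\xi_{1,m},x_1)$ are conditionally i.i.d.\ from $Q_{Y|X=x_1}$. Hence each cross-term $k_2(y_{1,l}, G(\xi_{1,j},x_1))$ has conditional mean $\mathbb{E}[k_2(Y^{x_1}_1,\hat{Y}^{x_1}_1)\mid X=x_1]$, and each off-diagonal generator--generator term has conditional mean $\mathbb{E}[k_2(\hat{Y}^{x_1}_1,\hat{Y}^{x_1}_2)\mid X=x_1]$. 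Averaging with the coefficients in \eqref{equ:lossA} yields
$\mathbb{E}[T_1\mid x_1] = -2\,\mathbb{E}[k_2(Y^{x_1}_1,\hat{Y}^{x_1}_1)\mid X=x_1] + \mathbb{E}[k_2(\hat{Y}^{x_1}_1,\hat{Y}^{x_1}_2)\mid X=x_1]$.
Taking expectation over $x_1\sim P_X$ and comparing against \eqref{equ:MMDX} gives $\mathbb{E}[T_1] = AMMD^2(P,Q) - C_0$, which establishes the first claim.

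For the variance, I would invoke the law of total variance, $\mathrm{Var}(T_1) = \mathbb{E}[\mathrm{Var}(T_1\mid x_1)] + \mathrm{Var}(\mathbb{E}[T_1\mid x_1])$. The second summand equals $K_0$ by the conditional-mean expression above. For the first summand I would decompose $T_1 = S_1 + S_2$, where $S_1$ is the cross-sample sum and $S_2$ is the generator--generator U-statistic. Expanding $\mathrm{Var}(S_1\mid x_1)$ by classifying index pairs $((j,l),(j',l'))$ into the four cases $j\stackrel{?}{=}j'$ and $l\stackrel{?}{=}l'$, the fully-distinct case vanishes by conditional independence, and the remaining three contribute $O(1/(mr))$, $O(1/m)$ and $O(1/r)$ under the kernel second-moment bound in Assumption \ref{ass1}; this yields $\mathrm{Var}(S_1\mid x_1) = O(1/m+1/r) = O(1/\min\{m,r\})$. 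A standard order-2 U-statistic calculation gives $\mathrm{Var}(S_2\mid x_1) = O(1/m)$, and the cross-covariance is controlled by Cauchy--Schwarz. Hence $\mathrm{Var}(T_1\mid x_1) = O(1/\min\{m,r\})$ and, after taking $\mathbb{E}_{x_1}$, dividing by $n$, and adding $K_0/n$, we arrive at the stated rate.

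The main technical obstacle is the bookkeeping in the conditional variance of $S_1$ and its covariance with $S_2$, since $S_1$ and $S_2$ share the generator noises $\xi_{1,j}$; the pair-classification above makes this a routine but somewhat tedious U-statistic computation, whose explicit constants are the ones collected into the formula deferred to Section B.
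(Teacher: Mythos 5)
Your proposal is correct and follows essentially the same route as the paper: reduce to a single block, condition on $x_1$ to get unbiasedness and identify $K_0$ as the variance of the conditional mean via the law of total variance, and bound the conditional variance by enumerating index pairs in the cross and generator--generator sums to obtain the $O(1/\min\{m,r\})$ rate. The only cosmetic difference is that the paper conditions on all of $\bm{x}=(x_1,\dots,x_n)$ at once and expands the full conditional second moment (including the $S_1$--$S_2$ cross terms) explicitly, whereas you first invoke i.i.d.-ness of the blocks and handle the cross-covariance by Cauchy--Schwarz; both yield the same bound.
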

$C_0$ in Theorem \ref{thm:AMMD2} is free of the conditional generative model and thus does not need to be embodied at the training/evaluation phase. This is in the same spirit of NLL which is formed by a free-of-model constant and the Kullback–Leibler divergence between the model and data. 
Theorem \ref{thm:AMMD2} shows that if $n$ is not allowed to be large (e.g., $n$ is bounded above by the number of class in the label-based image generation problems), the variance of the estimator $\hat{A^2}(P, Q)$ should be reduced by increasing $m$ and $r$. On the other hand, if $n$ is allowed to be large (e.g., in regression problems with continuous features), then given a fixed computational budget, we should increase $n$ while maintaining the small possible values $m=2$ and $r=1$ to reduce the variance of $\hat{A^2}(P, Q)$ efficiently.  

\subsection{Joint Maximum Mean Discrepancy (JMMD)} 
We then introduce the JMMD metric, which is based on the joint distribution. Compared with AMMD, JMMD is more suitable for single-output tasks (such as regression) where data consists of joint i.i.d. samples $(x_i,y_i)$; see Section C for a more detailed discussion. According to the observation in Lemma \ref{noise}, the matching of $Q_{Y|X=x}$ (the conditional distribution of
$G(\xi, X)|X=x$) with $P_{Y|X=x}$ for a.e. $x$ can be transferred to the matching of $Q_{X,Y}$ (the joint distribution of $(X, G(\xi, X))$) with $P_{X,Y}$. 
This motivates us to define the following metric which we term as JMMD:
\begin{eqnarray*}
&&JMMD^2(P, Q)=MMD^2(P_{X,Y}, Q_{X,Y})\\
&=& \mathbb{E}[k_3((X_1,Y_1), (X_2,Y_2))]- 2\mathbb{E}[k_3((X_1,Y_1), (\hat{X}_1,\hat{Y}_1))] + \mathbb{E}[k_3((\hat{X}_1,\hat{Y}_1), (\hat{X}_2,\hat{Y}_2))]
\end{eqnarray*}
where $k_3=k_1\otimes k_2$ is the kernel of the tensor product space $\mathcal{F}_X \otimes \mathcal{F}_Y$ and $(X_1,Y_1), (X_2,Y_2)\stackrel{i.i.d.}{\sim} P_{X,Y}$ and $(\hat{X}_1,\hat{Y}_1), (\hat{X}_2,\hat{Y}_2)\stackrel{i.i.d.}{\sim} Q_{X,Y}$. Note that $JMMD^2(P, Q)$ can be viewed alternatively as the discrepancy of the cross-covariance operators $C_{P_{YX}}$, $C_{Q_{YX}}$ defined on the tensor product space $\mathcal{F}_X \otimes \mathcal{F}_Y$. Since $C_{P_{YX}}$, $C_{Q_{YX}}\in \mathcal{F}_X \otimes \mathcal{F}_Y$ guaranteed by Assumption \ref{ass1}, $JMMD^2(P, Q)$ is well-defined (see Section A for more details).

\begin{theorem} \label{thm:JMMD}
$JMMD^2(P, Q)\ge 0$ and $JMMD^2(P, Q)= 0$ if and only if for a.e. $x$ with respect to $P_X$, $P_{Y|X=x}= Q_{Y|X=x}$.
\end{theorem}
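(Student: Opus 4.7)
The plan is to reduce Theorem~\ref{thm:JMMD} to the unconditional MMD characterization (the theorem attributed to \citep{gretton2012kernel}) applied on the product space $\mathcal{X}\times\mathcal{Y}$ with the tensor-product kernel $k_3=k_1\otimes k_2$, and then translate the resulting statement about joint distributions into the claimed conditional one.

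First I would check that the relevant MMD is well-defined. By Assumption~\ref{ass2}(3), $k_3$ is characteristic on $\mathcal{X}\times\mathcal{Y}$, and by the remark immediately preceding the theorem (itself a consequence of Assumption~\ref{ass1}), the kernel mean embeddings $\mu_{P_{X,Y}}$ and $\mu_{Q_{X,Y}}$ exist as elements of $\mathcal{F}_X\otimes\mathcal{F}_Y$. Applying the unconditional MMD characterization on $(\mathcal{X}\times\mathcal{Y},k_3)$ then gives $JMMD^2(P,Q)=MMD^2(P_{X,Y},Q_{X,Y})\ge 0$, with equality if and only if $P_{X,Y}=Q_{X,Y}$.

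Next I would convert the joint-equality condition into the conditional form required by the theorem. The key observation is that $Q_{X,Y}$ is by construction the law of $(X,G(\xi,X))$ with $\xi$ independent of $X$, so its $X$-marginal is automatically $P_X$. Because $\mathcal{Y}$ is a standard Borel space (the same hypothesis used in Lemma~\ref{noise}), regular conditional distributions exist and we may disintegrate $P_{X,Y}=P_X\otimes P_{Y|X}$ and $Q_{X,Y}=P_X\otimes Q_{Y|X}$. By the uniqueness of disintegration, $P_{X,Y}=Q_{X,Y}$ is then equivalent to $P_{Y|X=x}=Q_{Y|X=x}$ for $P_X$-a.e.\ $x$. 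Chaining this with the previous step completes the proof.

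The only nontrivial step is the disintegration argument, and even this is standard once one exploits the standard-Borel hypothesis on $\mathcal{Y}$ already invoked in Lemma~\ref{noise} and the fact that the $X$-marginal of the generative model coincides with $P_X$ by construction. Everything else is a one-line reduction to the unconditional MMD theorem, so I expect essentially no technical obstacle beyond book-keeping.
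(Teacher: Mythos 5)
Your proposal is correct and follows essentially the same route as the paper's proof: reduce $JMMD^2$ to the unconditional MMD on $\mathcal{X}\times\mathcal{Y}$ with the characteristic tensor kernel $k_1\otimes k_2$, conclude $P_{X,Y}=Q_{X,Y}$ at equality, and translate this into the conditional statement. The only difference is that you explicitly justify the final equivalence via disintegration and the matching $X$-marginals, a step the paper asserts without detail, so your write-up is if anything slightly more complete.
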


Theorem \ref{thm:JMMD} shows that $JMMD^2(P, Q)$ offers a metric to measure $P_{Y|X}$ versus $Q_{Y|X}$. In parallel to AMMD, we propose a Monte Carlo estimator of $JMMD^2$ for conditional generative models: 
Take a batch of samples from $P$ of batch size $r$ $\{(x_l,y_l):l\in [r]\}$; $r\ge 2$. Generate a batch from $Q$ of batch size $m$ $\{(\hat{x}_j,G(\xi_j,\hat{x}_j)):j\in [m]\}$ where $\xi_1,..., \xi_m$ are i.i.d. and independent of $x_1,\ldots,x_r,\hat{x}_1,\ldots,\hat{x}_m$; $m\ge 2$. Compute
{ \begin{align}
&\hat{J^2}(P, Q)=
- \frac{2}{mr} \sum_{j=1}^m \sum_{l=1}^r  k_3((x_l,y_l) , (\hat{x}_j,G(\xi_j,\hat{x}_j))) \nonumber\\
& + \frac{1}{m(m-1)} \sum_{j=1}^m \sum_{j'\ne j, j'=1}^m k_3((\hat{x}_{j},G(\xi_{j},\hat{x}_{j})), (\hat{x}_{j'},G(\xi_{j'},\hat{x}_{j'}))) \label{equ:lossJ}
\end{align}}
The next theorem establishes the error analysis of the estimator $\hat{J^2}(P, Q)$. 
\begin{theorem} \label{thm:JMMD2}
$\hat{J^2}(P, Q)$ is an unbiased estimator of $JMMD^2(P, Q)-C_1$ where $C_1$ is a constant independent of $Q$ given by $C_1=\mathbb{E}[k_3((X_1,Y_1) , (X_2,Y_2))]$. Moreover, the variance of $\hat{J^2}(P, Q)$ is $O(\frac{1}{\min\{m,r\}})$. (The explicit formula of the variance is given in Section B.)
\end{theorem}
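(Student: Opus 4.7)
The plan is to split $\hat{J^2}(P, Q) = T_1 + T_2$ into its cross term and its within-$Q$ term, and handle each using standard one- and two-sample U-statistic arguments, in direct parallel to the proof of unbiasedness and variance for the classical MMD estimator in \eqref{equ:MMD}. The first observation is that, since each $\xi_j$ is independent of all the inputs and $\hat{x}_j \sim P_X$, the pairs $Z_j := (\hat{x}_j, G(\xi_j, \hat{x}_j))$ for $j \in [m]$ are i.i.d.\ from $Q_{X,Y}$ and are jointly independent of the i.i.d.\ samples $W_l := (x_l, y_l) \sim P_{X,Y}$ for $l \in [r]$. Throughout I rely on Assumption \ref{ass1}, which ensures that all relevant moments of $k_3 = k_1 \otimes k_2$ under both $P_{X,Y}$ and $Q_{X,Y}$ are finite.

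For unbiasedness, each of the $mr$ summands in $T_1 = -\frac{2}{mr} \sum_{j,l} k_3(W_l, Z_j)$ has expectation $\mathbb{E}[k_3((X_1, Y_1), (\hat{X}_1, \hat{Y}_1))]$ by the independence of $W_l$ and $Z_j$, and each of the $m(m-1)$ off-diagonal summands of $T_2 = \frac{1}{m(m-1)} \sum_{j \ne j'} k_3(Z_j, Z_{j'})$ has expectation $\mathbb{E}[k_3((\hat{X}_1, \hat{Y}_1), (\hat{X}_2, \hat{Y}_2))]$ by the independence of $Z_j$ and $Z_{j'}$. Summing term by term gives $\mathbb{E}[\hat{J^2}(P, Q)] = -2\mathbb{E}[k_3((X_1, Y_1), (\hat{X}_1, \hat{Y}_1))] + \mathbb{E}[k_3((\hat{X}_1, \hat{Y}_1), (\hat{X}_2, \hat{Y}_2))] = JMMD^2(P, Q) - C_1$.

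For the variance, I decompose $\text{Var}(\hat{J^2}) = \text{Var}(T_1) + \text{Var}(T_2) + 2\,\text{Cov}(T_1, T_2)$. To bound $\text{Var}(T_1)$, I expand
$$\text{Var}(T_1) = \frac{4}{m^2 r^2} \sum_{j, j', l, l'} \text{Cov}\bigl(k_3(W_l, Z_j), k_3(W_{l'}, Z_{j'})\bigr)$$
and partition the four-way index sum according to whether $l = l'$ and whether $j = j'$: quadruples with both indices distinct contribute zero by independence; the $O(m^2 r)$ quadruples sharing only an $l$-index and the $O(m r^2)$ quadruples sharing only a $j$-index each contribute an $O(1)$ covariance; and the $mr$ fully coinciding quadruples contribute the variance of $k_3(W_1, Z_1)$. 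Dividing by $m^2 r^2$ yields $\text{Var}(T_1) = O(1/m + 1/r) = O(1/\min(m, r))$. The term $T_2$ is the standard one-sample MMD-type U-statistic of order 2 applied to the i.i.d.\ sample $Z_1, \ldots, Z_m$, whose variance is $O(1/m)$ by the classical U-statistic variance formula. Cauchy-Schwarz then gives $|\text{Cov}(T_1, T_2)| \le \sqrt{\text{Var}(T_1) \text{Var}(T_2)} = O(1/\min(m, r))$, completing the bound.

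The main bookkeeping obstacle is the four-way index partition in $\text{Var}(T_1)$, but it is entirely mechanical and mirrors the classical two-sample MMD variance computation. The only structural novelty of this estimator is that the input $\hat{x}_j$ appears in both coordinates of the $Q$-sample $Z_j$; however, since the $Z_j$'s are still i.i.d.\ from $Q_{X, Y}$ and independent of the $W_l$'s, this does not affect the U-statistic analysis. The explicit variance formula promised in the theorem statement can then be obtained by carrying out the same partition argument without $O(\cdot)$ hiding and expressing each contribution in terms of the conditional and marginal moments of $k_3$.
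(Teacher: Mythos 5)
Your proposal is correct and follows essentially the same route as the paper's proof: unbiasedness by taking expectations term by term using the independence and i.i.d.\ structure of $(\hat{x}_j, G(\xi_j,\hat{x}_j))$ and $(x_l,y_l)$, and the variance bound by partitioning the index sums according to which indices coincide. The only minor difference is that you bound the cross-covariance between the two terms via Cauchy--Schwarz rather than computing it explicitly as the paper does when it expands $\mathbb{E}[\hat{J^2}(P,Q)^2]$ in full; this loses the explicit variance formula but suffices for the stated $O(1/\min\{m,r\})$ rate.
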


$C_1$ in Theorem \ref{thm:JMMD2} is free of the conditional generative model. Theorem \ref{thm:JMMD2} shows that the variance of $\hat{J^2}(P, Q)$ is decreasing at the order of $\frac{1}{\min\{m,r\}}$. Therefore, given a fixed computational budget $B$, we should set $m=\Theta(r)=\Theta(\sqrt{B})$ to achieve the minimum variance of the estimator $\hat{J^2}(P, Q)$. 


\begin{algorithm}[t]
  \caption{Algorithm Framework of A-CGM}  \label{algo}
 \hspace*{\algorithmicindent} \textbf{Input:} Training Dataset $\mathcal{D} = \{(x_i, y_i): i\in \mathcal{I}\}$. \\
 \hspace*{\algorithmicindent} \textbf{Output:} Finalized parameters $\theta$ in the generative model $G_\theta(\xi, x)$.  
  \begin{algorithmic}[1]
\State Randomly divide the training dataset $\mathcal{D}$ into mini batches.
\For{$t = 0, \ldots, T-1$} 
\State Set $\mathcal{B}^G=\emptyset$
\For{each mini batch $\mathcal{B}$ in $\mathcal{D}$} 
\For{each $x \in \mathcal{B}$}
\State Draw multiple i.i.d. copies $\xi_1,\ldots,\xi_m$ from $P_{\xi}$ 
\State Generate conditional samples by forward-propagating through $G_\theta(\xi_j, x)$
\State Add $(x, G_\theta(\xi_1, x)), \ldots, (x, G_\theta(\xi_m, x))$ into $\mathcal{B}^G$
\EndFor
\State Optimize $\theta$ by $\hat{A^2}(P, Q_\theta)$ in \eqref{equ:lossA} based on $\mathcal{B}$ and $\mathcal{B}^G$
\EndFor
\EndFor

  \end{algorithmic}  
\end{algorithm}

\vspace{-0.3em}
\subsection{Connections and Comparisons among Metrics}
\label{sec:connection}
\vspace{-0.3em}
We establish the theoretical connections among CMMD, JMMD, and AMMD as follows.


\begin{theorem} \label{thm:connection1}
Suppose that $AMMD$ and $JMMD$ are well-defined.
Then we have that
$$JMMD^2\le \mathbb{E}_{x\sim P_X}[k_1(x,x)] AMMD^2.$$
\end{theorem}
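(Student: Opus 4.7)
The plan is to express both $JMMD^2$ and $AMMD^2$ in terms of the conditional mean embedding residual $f(x) := \mu_{P_{Y|X=x}} - \mu_{Q_{Y|X=x}} \in \mathcal{F}_Y$, and then pass from $JMMD^2$ to $AMMD^2$ via a two-step reduction: a Jensen-type inequality for Bochner integrals, followed by Cauchy--Schwarz.

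First I would use the tensor structure of the kernel $k_3 = k_1 \otimes k_2$, whose feature map is $\phi_3(x,y) = \phi_1(x) \otimes \phi_2(y)$, together with the tower property to write
\begin{equation*}
\mu_{P_{X,Y}} = \mathbb{E}_{X \sim P_X}\bigl[\phi_1(X) \otimes \mu_{P_{Y|X}}\bigr].
\end{equation*}
Since $Q_{X,Y}$ denotes the law of $(X, G(\xi, X))$ with $X \sim P_X$, the marginal of $X$ under $Q$ is also $P_X$, so an identical decomposition gives $\mu_{Q_{X,Y}} = \mathbb{E}_{X \sim P_X}[\phi_1(X) \otimes \mu_{Q_{Y|X}}]$. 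Subtracting, by linearity of the Bochner integral,
\begin{equation*}
\mu_{P_{X,Y}} - \mu_{Q_{X,Y}} = \mathbb{E}_{X \sim P_X}\bigl[\phi_1(X) \otimes f(X)\bigr],
\end{equation*}
so that $JMMD^2(P,Q) = \bigl\|\mathbb{E}_X[\phi_1(X) \otimes f(X)]\bigr\|_{\mathcal{F}_X \otimes \mathcal{F}_Y}^2$.

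Next I would apply the triangle inequality for Bochner integrals (the Hilbert-space Jensen inequality), which gives $\|\mathbb{E}_X[g(X)]\| \le \mathbb{E}_X[\|g(X)\|]$ for $g(X) = \phi_1(X) \otimes f(X)$. Using the cross-norm identity $\|u \otimes v\|_{\mathcal{F}_X \otimes \mathcal{F}_Y} = \|u\|_{\mathcal{F}_X} \|v\|_{\mathcal{F}_Y}$ for simple tensors, and $\|\phi_1(x)\|_{\mathcal{F}_X}^2 = k_1(x,x)$, this yields
\begin{equation*}
JMMD^2(P,Q) \le \Bigl(\mathbb{E}_{X \sim P_X}\bigl[\sqrt{k_1(X,X)}\,\|f(X)\|_{\mathcal{F}_Y}\bigr]\Bigr)^2.
\end{equation*}
A single application of Cauchy--Schwarz in $L^2(P_X)$ then separates the two factors:
\begin{equation*}
\Bigl(\mathbb{E}_X\bigl[\sqrt{k_1(X,X)}\,\|f(X)\|_{\mathcal{F}_Y}\bigr]\Bigr)^2 \le \mathbb{E}_X[k_1(X,X)] \cdot \mathbb{E}_X\bigl[\|f(X)\|_{\mathcal{F}_Y}^2\bigr].
\end{equation*}

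Finally I would identify $\|f(x)\|_{\mathcal{F}_Y}^2 = \|\mu_{P_{Y|X=x}} - \mu_{Q_{Y|X=x}}\|_{\mathcal{F}_Y}^2 = MMD_{X=x}^2(P_{Y|X=x}, Q_{Y|X=x})$, so that $\mathbb{E}_X[\|f(X)\|^2] = AMMD^2(P,Q)$, completing the chain. There is no real obstacle here; the main item requiring care is justifying the interchange of expectation with the tensor in the Bochner sense and the cross-norm identity. Both follow from the well-definedness of $JMMD$ and $AMMD$ assumed in the statement (which in turn rests on Assumption \ref{ass1}), so these steps are standard RKHS manipulations rather than genuine difficulties.
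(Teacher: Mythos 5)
Your proposal is correct and follows essentially the same route as the paper's proof: decompose $\mu_{P_{X,Y}}-\mu_{Q_{X,Y}}$ as $\mathbb{E}_{x\sim P_X}[\phi_1(x)\otimes f(x)]$ using the shared $X$-marginal, then apply the Bochner-integral triangle (Minkowski) inequality, the cross-norm identity, and Cauchy--Schwarz, finally identifying $\mathbb{E}_X[\|f(X)\|_{\mathcal{F}_Y}^2]$ with $AMMD^2$. No substantive differences to report.
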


\begin{theorem} \label{thm:connection2}
Suppose that $AMMD$ is well-defined. Moreover, suppose that for all $g\in\mathcal{F}_Y$, $\mathbb{E}_{P_{Y|X}} [g(Y )|X] \in \mathcal{F}_X$ and $\mathbb{E}_{Q_{Y|X}} [g(Y )|X] \in \mathcal{F}_X$ so that the conditional mean embeddings $C_{P_{Y|X}}$, $C_{Q_{Y|X}}$ are well-defined. Furthermore, we assume $C_{P_{Y|X}}\in \mathcal{F}_X \otimes \mathcal{F}_Y$, $C_{Q_{Y|X}}\in \mathcal{F}_X \otimes \mathcal{F}_Y$ so that CMMD \eqref{equ:CMMD0} is well-defined. Then we have
$$AMMD^2
\le \mathbb{E}_{x\sim P_X} [k_1(x,x)] CMMD^2.$$
\end{theorem}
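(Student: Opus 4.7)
The plan is to view the conditional mean embedding $C_{P_{Y|X}}$ (and similarly $C_{Q_{Y|X}}$) as a Hilbert--Schmidt operator from $\mathcal{F}_X$ to $\mathcal{F}_Y$, which is the standard isometric identification with elements of the tensor product space $\mathcal{F}_X \otimes \mathcal{F}_Y$. Under the stated assumptions, the key reproducing property of the conditional mean embedding applies: for $P_X$-a.e.\ $x$, $C_{P_{Y|X}}\phi_1(x) = \mu_{P_{Y|X=x}}$ and $C_{Q_{Y|X}}\phi_1(x) = \mu_{Q_{Y|X=x}}$. I would first invoke this pointwise identity to rewrite
\[
MMD_{X=x}^2(P_{Y|X=x}, Q_{Y|X=x}) = \|\mu_{P_{Y|X=x}} - \mu_{Q_{Y|X=x}}\|^2_{\mathcal{F}_Y} = \|(C_{P_{Y|X}} - C_{Q_{Y|X}})\phi_1(x)\|^2_{\mathcal{F}_Y}.
\]

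Next I would apply the basic operator-norm inequality to the right-hand side, obtaining
\[
\|(C_{P_{Y|X}} - C_{Q_{Y|X}})\phi_1(x)\|^2_{\mathcal{F}_Y} \le \|C_{P_{Y|X}} - C_{Q_{Y|X}}\|^2_{\text{op}}\,\|\phi_1(x)\|^2_{\mathcal{F}_X}.
\]
Two elementary facts then finish the pointwise bound: (i) the reproducing property gives $\|\phi_1(x)\|^2_{\mathcal{F}_X} = k_1(x,x)$; and (ii) the operator norm is dominated by the Hilbert--Schmidt norm, and under the identification of Hilbert--Schmidt operators with the tensor product space, the HS norm equals $\|\,\cdot\,\|_{\mathcal{F}_X\otimes\mathcal{F}_Y}$, so $\|C_{P_{Y|X}} - C_{Q_{Y|X}}\|^2_{\text{op}} \le \|C_{P_{Y|X}} - C_{Q_{Y|X}}\|^2_{\mathcal{F}_X\otimes\mathcal{F}_Y} = CMMD^2$. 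Combining, $MMD_{X=x}^2 \le k_1(x,x)\cdot CMMD^2$ for $P_X$-a.e.\ $x$.

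Finally, I would take expectations over $x \sim P_X$ on both sides. Since $CMMD^2$ does not depend on $x$, it factors out, yielding
\[
AMMD^2 = \mathbb{E}_{x\sim P_X}[MMD_{X=x}^2] \le \mathbb{E}_{x\sim P_X}[k_1(x,x)]\cdot CMMD^2,
\]
which is the desired inequality. Measurability of $x \mapsto MMD_{X=x}^2$ (needed for taking the expectation) follows because it is expressed through inner products of the measurable $\phi_1(x)$ with a fixed Hilbert--Schmidt operator, and Assumption~\ref{ass1} ensures $\mathbb{E}_{x\sim P_X}[k_1(x,x)] < \infty$.

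The main obstacle is not any calculation but rather the careful bookkeeping between the two viewpoints of $C_{P_{Y|X}}$: as a bounded operator $\mathcal{F}_X\to\mathcal{F}_Y$ (which is what one needs to evaluate $C_{P_{Y|X}}\phi_1(x)$ and apply the operator-norm bound), and as an element of $\mathcal{F}_X\otimes\mathcal{F}_Y$ (which is how $CMMD^2$ is defined in \eqref{equ:CMMD0}). The assumption $C_{P_{Y|X}},C_{Q_{Y|X}}\in\mathcal{F}_X\otimes\mathcal{F}_Y$ is precisely what makes both viewpoints available simultaneously and validates the Hilbert--Schmidt identification; once that is in place the argument is a three-line chain of inequalities.
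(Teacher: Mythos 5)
Your proof is correct, and while it shares the paper's overall skeleton (a pointwise bound $MMD^2_{X=x}\le k_1(x,x)\,CMMD^2$ followed by integration over $P_X$), the way you obtain the pointwise bound is genuinely different and in fact more robust. The paper works with $g=C_{P_{Y|X}}-C_{Q_{Y|X}}$ as an element of $\mathcal{F}_X\otimes\mathcal{F}_Y$, sets $h(x')=\|g^{y'}(x')\|_{\mathcal{F}_Y}$, asserts that $h\in\mathcal{F}_X$, and then applies the reproducing property and Cauchy--Schwarz to $h$ to get $(h(x'))^2\le k_1(x',x')\|h\|^2_{\mathcal{F}_X}$. That assertion is delicate: the pointwise $\mathcal{F}_Y$-norm of a tensor-product element need not lie in $\mathcal{F}_X$ (for $g=f_1\otimes f_2$ it equals $|f_1|\,\|f_2\|_{\mathcal{F}_Y}$, and absolute values of RKHS functions generally leave the RKHS), so that step needs extra justification. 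Your route through the operator picture --- $\|(C_{P_{Y|X}}-C_{Q_{Y|X}})\phi_1(x)\|_{\mathcal{F}_Y}\le\|C_{P_{Y|X}}-C_{Q_{Y|X}}\|_{\mathrm{op}}\,\|\phi_1(x)\|_{\mathcal{F}_X}$, combined with $\|\cdot\|_{\mathrm{op}}\le\|\cdot\|_{\mathrm{HS}}=\|\cdot\|_{\mathcal{F}_X\otimes\mathcal{F}_Y}$ and $\|\phi_1(x)\|^2_{\mathcal{F}_X}=k_1(x,x)$ --- uses only standard facts about Hilbert--Schmidt operators and sidesteps that issue entirely; the hypotheses $C_{P_{Y|X}},C_{Q_{Y|X}}\in\mathcal{F}_X\otimes\mathcal{F}_Y$ are exactly what licenses the identification you invoke, and the identity $C_{P_{Y|X}}\phi_1(x)=\mu_{P_{Y|X=x}}$ is the defining property of the conditional mean embedding that the theorem's assumptions guarantee. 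In short: same destination and same two-step structure, but your argument for the crucial pointwise inequality is different from the paper's and arguably cleaner.
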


We further highlight the strengths of AMMD and JMMD on conditional generative model evaluation which is a challenging task due to delicacies of evaluation at the conditional distribution level. First, both metrics are "distribution-free", i.e., the data or conditional generative model are not restricted to a specific type of distributions. In contrast, FID for instance requires the Gaussian assumption. Second, they are "model-free", i.e., their evaluation does not involve additional estimated models beyond the conditional generative model itself, such as kernel density estimators in Indirect Sampling Likelihood \citep{breuleux2011quickly,goodfellow2014generative} or estimated discriminators \citep{nowozin2016f,arjovsky2017wasserstein}.




\vspace{-0.3em}
\subsection{Conditional Generative Model Construction} \label{sec:CGM}
\vspace{-0.3em}
With the evaluation metrics, we present two corresponding deep-learning-based methods to construct conditional generative models. Our approaches are named as J-CGM and A-CGM, with special targets on the JMMD/AMMD for different tasks. 
For performance measurement, the values of JMMD and AMMD are estimated by drawing samples from the generative model optimized in J-CGM/A-CGM. Denote $G_\theta(\xi, X)$ as the generative model optimized in J-CGM/A-CGM with parameters $\theta$. Note that $G_\theta(\xi, X)$ takes both the given conditional variables $X$ and the extra random vector $\xi$ as inputs. Let $Q_\theta$ be the joint distribution of $(X, G_\theta(\xi, X))$. A detailed step-by-step pseudo-code of A-CGM is listed in Algorithm \ref{algo}. A similar procedure for J-CGM is presented in Algorithm 2 in Section C.

\vspace{-0.3em}
\section{Experiments} \label{sec:exp}
\vspace{-0.3em}

\textbf{Experimental Setup.} We empirically verify the effectiveness of our proposed approaches on both regression and image generation tasks.  In both tasks, we compare the performance of our approaches with the state-of-the-art MMD-based conditional generative model, CGMMN \citep{ren2016conditional}. For regression, our experiments are conducted on the following widely used real-world benchmark  datasets: Boston, Concrete, Energy, Wine, Yacht, Kin8nm, Protein, and CCPP \citep{hernandez2015probabilistic,gal2016dropout,lakshminarayanan2017simple,pearce2018high}. Besides the JMMD and AMMD values, we also report the scores of FID \citep{heusel2017gans,lucic2018gans}, which is a standard metric for assessing the quality of generative models. In our experiments, FID is computed based on the joint distribution of $(X,Y)$, since it is originally defined in the unconditional sense. 
In the label-based image generation task, we adopt the benchmark dataset MNIST \citep{lecun1998gradient} to evaluate the correctness of our framework. In this task, $X$ is the label of the image and the generative model $G_\theta(\xi, X)$ is expected to output random image samples with the attribute of class $X$. 
We provide visuals to directly show the generation performance of different approaches. All experiments are conducted on a GeForce RTX 2080 Ti GPU. More experimental results are presented in Section C.
\vspace{-0.3em}
\subsection{Aleatoric Uncertainty in Regression}
\vspace{-0.3em}
\begingroup
\tabcolsep = 0.5pt

\begin{table}[]
    \centering
{\scriptsize   
    \begin{tabular}{c|ccc|ccc|ccc}
       & \multicolumn{3}{|c|}{CGMMN} & \multicolumn{3}{|c|}{J-CGM} & \multicolumn{3}{|c}{A-CGM}\\
 Dataset & JMMD  & AMMD & FID & JMMD  & AMMD & FID & JMMD  & AMMD & FID\\
 \midrule
     Boston  & $1.92 \cdot 10^{-2} $  & \bm{$8.41 \cdot 10^{-5}$} & $1.35 \cdot 10^{-2} $& \bm{$2.92 \cdot 10^{-4}$}  & $8.49 \cdot 10^{-4}$ &  $7.84 \cdot 10^{-3}$ & $3.17 \cdot 10^{-4}$ & $8.74 \cdot 10^{-5}$ & \bm{$5.20 \cdot 10^{-3}$} \\
     
     Concrete  & $9.57 \cdot 10^{-3}$  & $1.67 \cdot 10^{-4}$ & \bm{$8.64 \cdot 10^{-3}$} & \bm{$1.53 \cdot 10^{-4}$}  & $1.78 \cdot 10^{-4}$ & $9.86 \cdot 10^{-3}$ & $2.47 \cdot 10^{-4}$  & \bm{$1.99 \cdot 10^{-5}$} & $9.74 \cdot 10^{-3}$\\
     
     Energy & $1.87 \cdot 10^{-2}$  & $1.40 \cdot 10^{-4}$ & \bm{$7.96 \cdot 10^{-3}$} & $3.16 \cdot 10^{-4}$  & $9.45 \cdot 10^{-4}$ & $9.16 \cdot 10^{-3}$ & \bm{$3.09 \cdot 10^{-4}$}  & \bm{$1.23 \cdot 10^{-4}$} & $9.38 \cdot 10^{-3}$\\
     
     Wine & $1.09 \cdot 10^{-2}$  & $3.02 \cdot 10^{-4}$ & $1.01 \cdot 10^{-2}$ & \bm{$1.14 \cdot 10^{-4}$}  & $3.61 \cdot 10^{-4}$ & $9.94 \cdot 10^{-3}$ & $1.16 \cdot 10^{-4}$  & \bm{$2.72 \cdot 10^{-4}$} & \bm{$9.86 \cdot 10^{-3}$}\\
     
     Yacht & $1.28 \cdot 10^{-2}$  & $5.76 \cdot 10^{-5}$ & $1.11\cdot 10^{-2}$ & $6.60\cdot 10^{-4}$ & $4.75\cdot 10^{-4}$  &  $1.13\cdot 10^{-2}$ & \bm{$1.67 \cdot 10^{-4}$}  & \bm{$4.63 \cdot 10^{-5}$} & \bm{$7.68 \cdot 10^{-3}$}\\
     
     Kin8nm & $1.00\cdot 10^{-2}$  & $1.46\cdot 10^{-3}$ & $1.41 \cdot 10^{-2}$ & $1.10 \cdot 10^{-4}$ & $1.39 \cdot 10^{-3}$ & \bm{$9.69 \cdot 10^{-3}$} & \bm{$1.01 \cdot 10^{-4}$}  & \bm{$1.38 \cdot 10^{-3}$} & $9.76 \cdot 10^{-3}$\\
     
     Protein & $ 9.20 \cdot 10^{-3}$  & $7.87 \cdot 10^{-3}$& $9.83 \cdot 10^{-3}$ & \bm{$7.08 \cdot 10^{-5}$} & $8.08 \cdot 10^{-3}$& \bm{$9.81 \cdot 10^{-3}$} & $8.60 \cdot 10^{-5}$  & \bm{$2.18 \cdot 10^{-3}$} & $1.00 \cdot 10^{-2}$\\
     
     CCPP & $ 1.64 \cdot 10^{-2}$  & $2.42 \cdot 10^{-3}$& $ 9.99\cdot 10^{-3}$ & $ 2.91\cdot 10^{-4}$ & $2.24 \cdot 10^{-3}$& $ \bm{9.52\cdot 10^{-3}$} & \bm{$2.57 \cdot 10^{-4}$}  & \bm{$ 1.59\cdot 10^{-3}$} & $1.02\cdot 10^{-2}$\\
    \end{tabular}
    \caption{Conditional generative models in regression. Best results are in \textbf{bold}.}
    \label{tab:my_table}}
\end{table}

\textbf{Kernel Selections.} As the regression data are low-dimensional, we choose $k_1$ and $k_2$ to be the standard Gaussian kernels $k_1(x_1,x_2):= \exp\left(-\frac{1}{2}\|x_1-x_2\|_2^2\right)$, $k_2(y_1,y_2):= \exp\left(-\frac{1}{2}\|y_1-y_2\|_2^2\right)$. Note that they readily satisfy both Assumptions \ref{ass1} and \ref{ass2}.

\textbf{Implementation Details.} For regression tasks, we apply a simple network architecture with 2 hidden layers to avoid overfitting. We use the ReLU function
as the activation function and the number of neurons in each hidden layer is 32. The input of the generative model is concatenated by two vectors, the covariate vector $X$ and the extra random vector $\xi$ following a $10$-dimensional uniform distribution $\text{Uniform}([-1,1]^{10})$. Our network is optimized by the Adam optimizer \citep{kingma2014adam} with learning rate 0.0005. For the preprocessing step, we follow the same experimental procedure in \citep{pearce2018high} on data normalization and dataset splitting. The AMMD evaluation omits the free-of-model constant $C_0$ as justified in Theorem \ref{thm:AMMD2}.

We evaluate the performance of our proposed J-CGM and A-CGM with the state-of-the-art baseline CGMMN \citep{ren2016conditional} on multiple real-world benchmark regression datasets. Table \ref{tab:my_table} reports the evaluation metrics from different models on the testing data. As shown, J-CGM and A-CGM achieve competitive performance under the AMMD and JMMD evaluation criteria, while A-CGM tends to produce better results on AMMD. In contrast, although CGMMN can produce satisfactory results on AMMD, it underperforms on JMMD in general. Under the FID criterion, J-CGM and A-CGM achieve slightly better results than CGMMN. However, note that FID is a heuristic criterion without the statistical properties we develop for AMMD and JMMD and is thus less reliable.


\vspace{-0.3em}
\subsection{Aleatoric Uncertainty in Image Generation}
\vspace{-0.3em}

\begin{figure}[t]
    \centering
    \includegraphics[width=1.0\textwidth]{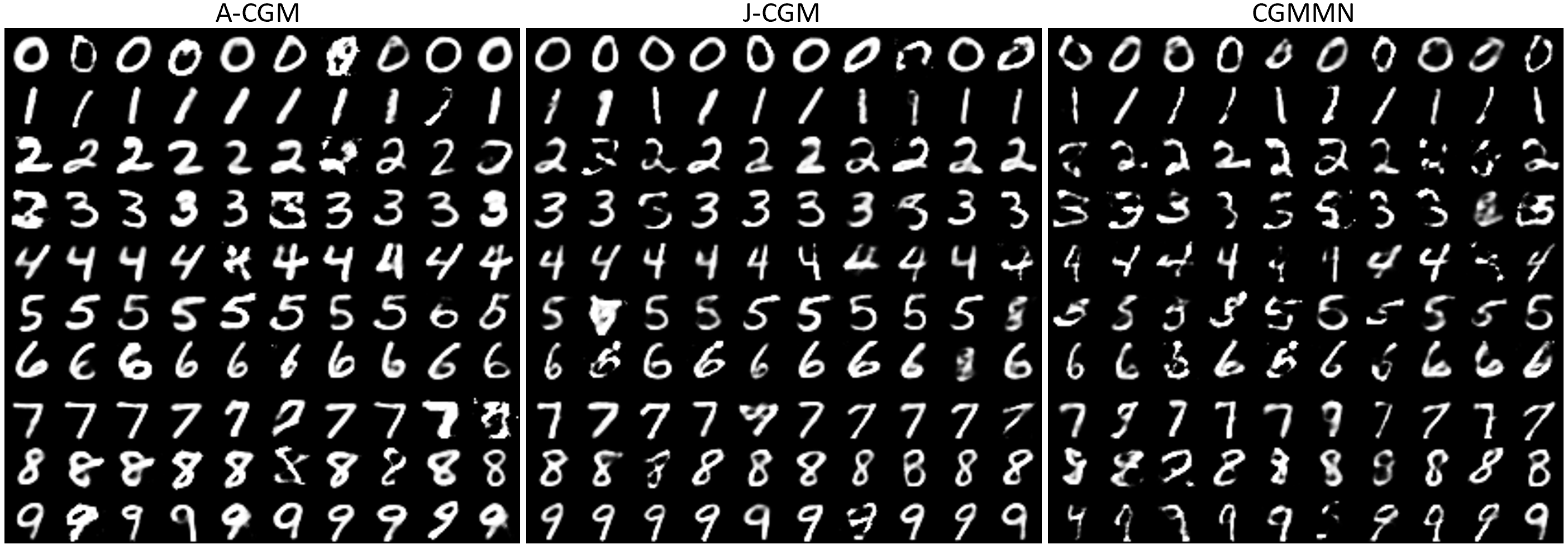}
    \caption{Random conditional samples generated by different approaches.}
    \label{myfigure}
\end{figure}


\textbf{Kernel Selections.} As pointed out by previous studies on deep kernels \citep{li2015generative,li2017mmd,wilson2016deep,liu2020learning,gao2021maximum}, for complicated and high-dimensional real-world data, a kernel test using a simple kernel such as Gaussian kernel should be conducted on the code/feature space instead of the original data space to provide stronger signals for discrepancy measurement of high-dimensional distributions. Following this guidance, we apply an auto-encoder \citep{rumelhart1985learning} to learn the representative features of the input images in the preprocessing step. Precisely, suppose the pre-trained auto-encoder network is given by $B_{\omega'}\circ A_\omega$ where
$A_\omega: \mathcal{Y}\to \hat{\mathcal{Y}}$ is the encoder network with $\hat{\mathcal{Y}}$ being the lower-dimensional code space; $B_{\omega'}: \hat{\mathcal{Y}}\to \mathcal{Y}$ is the decoder network. We use the following feature-aware deep kernel for the image space $\mathcal{Y}$:
$k_2(y_1, y_2)=  \Big((1 - \epsilon_0) \kappa_1(A_\omega(y_1), A_\omega(y_2)) + \epsilon_0 \Big)\kappa_2(y_1, y_2),$
where $\kappa_1$ is a Gaussian kernel defined on the code space $\hat{\mathcal{Y}}$; $\kappa_2$ is a Gaussian kernel defined on the original image space $\mathcal{Y}$;  $\epsilon_0 \in (0, 1)$ is introduced to ensure that $k_2(y_1, y_2)$ is a characteristic kernel \citep{liu2020learning,gao2021maximum}. We set $k_1$ to be
the standard Gaussian kernels since $\mathcal{X}$ is low-dimensional.

Corresponding to our kernel, we now assume that all conditional generative models output samples in the code space for the convenience of MMD tests: $G_\theta(\xi, X): \mathbb{R}^m \times \mathcal{X}\to \hat{\mathcal{Y}}$.
The generative image is then given by $B_{\omega'} \circ G_\theta(\xi, X)$.

\textbf{Implementation Details.} In the auto-encoder network, the encoder/decoder networks $A_\omega$ and $B_{\omega'}$ have a single hidden layer with 1024 neurons. The dimension of the code space $\hat{\mathcal{Y}}$ is 32. The generative network is formed by 3 hidden layers with ReLU function
as the activation function. The number of neurons in each hidden layer is 64, 256, and 256. The networks also take two vectors as input, the one-hot encoding vector of label $X$ and the extra random vector $\xi$ following a $10$-dimensional uniform distribution $\text{Uniform}([-1,1]^{10})$. The generative network is optimized by the Adam optimizer \citep{kingma2014adam} with learning rate 0.001. 

In Figure \ref{myfigure}, we show a few random conditional samples of the reconstructed images from A-CGM, J-CGM, and CGMMN. Overall, all models can generate clear and recognizable samples of handwritten digits. In particular, the reconstructed images from J-CGM are more diverse with multiple writing types, while those from A-CGM are more clearly distinct. These results evidently demonstrate the effectiveness of our approaches on multiple real-world applications. 

\vspace{-0.3em}
\section{Conclusions} \label{sec:con}
\vspace{-0.3em}
In this paper, we study the feasibility of leveraging conditional generative model on aleatoric uncertainty estimation. With theoretical justification, we propose two metrics for discrepancy measurement between two conditional distributions and demonstrate that both metrics can be easily and unbiasedly computed via Monte Carlo simulation. Experimental evaluations on multiple tasks corroborate our theory and further demonstrate the effectiveness of our approaches on real-world applications.  Our study explores a new direction on aleatoric uncertainty estimation, which overcomes a few limitations in the previous research. In the future, we will extend our approaches for aleatoric uncertainty estimation on more real-world applications such as super-resolution image generation.



\medskip

\bibliographystyle{abbrv}
\bibliography{bib}

\begin{thebibliography}{10}

\bibitem{arjovsky2017wasserstein}
M.~Arjovsky, S.~Chintala, and L.~Bottou.
\newblock Wasserstein generative adversarial networks.
\newblock In {\em International Conference on Machine Learning}, pages
  214--223. PMLR, 2017.

\bibitem{bishop1994mixture}
C.~M. Bishop.
\newblock Mixture density networks.
\newblock 1994.

\bibitem{breuleux2011quickly}
O.~Breuleux, Y.~Bengio, and P.~Vincent.
\newblock Quickly generating representative samples from an rbm-derived
  process.
\newblock {\em Neural Computation}, 23(8):2058--2073, 2011.

\bibitem{cawley2005estimating}
G.~C. Cawley, N.~L. Talbot, and O.~Chapelle.
\newblock Estimating predictive variances with kernel ridge regression.
\newblock In {\em Machine Learning Challenges Workshop}, pages 56--77.
  Springer, 2005.

\bibitem{cui2020calibrated}
P.~Cui, W.~Hu, and J.~Zhu.
\newblock Calibrated reliable regression using maximum mean discrepancy.
\newblock {\em Advances in Neural Information Processing Systems},
  33:17164--17175, 2020.

\bibitem{dalmasso2020conditional}
N.~Dalmasso, T.~Pospisil, A.~B. Lee, R.~Izbicki, P.~E. Freeman, and A.~I. Malz.
\newblock Conditional density estimation tools in python and r with
  applications to photometric redshifts and likelihood-free cosmological
  inference.
\newblock {\em Astronomy and Computing}, 30:100362, 2020.

\bibitem{dutordoir2018gaussian}
V.~Dutordoir, H.~Salimbeni, J.~Hensman, and M.~Deisenroth.
\newblock Gaussian process conditional density estimation.
\newblock {\em Advances in Neural Information Processing Systems}, 31, 2018.

\bibitem{fasiolo2021fast}
M.~Fasiolo, S.~N. Wood, M.~Zaffran, R.~Nedellec, and Y.~Goude.
\newblock Fast calibrated additive quantile regression.
\newblock {\em Journal of the American Statistical Association},
  116(535):1402--1412, 2021.

\bibitem{fukumizu2013kernel}
K.~Fukumizu, L.~Song, and A.~Gretton.
\newblock Kernel bayes' rule: Bayesian inference with positive definite
  kernels.
\newblock {\em The Journal of Machine Learning Research}, 14(1):3753--3783,
  2013.

\bibitem{gal2016dropout}
Y.~Gal and Z.~Ghahramani.
\newblock Dropout as a bayesian approximation: Representing model uncertainty
  in deep learning.
\newblock In {\em International Conference on Machine Learning}, pages
  1050--1059, 2016.

\bibitem{gao2021maximum}
R.~Gao, F.~Liu, J.~Zhang, B.~Han, T.~Liu, G.~Niu, and M.~Sugiyama.
\newblock Maximum mean discrepancy test is aware of adversarial attacks.
\newblock In {\em International Conference on Machine Learning}, pages
  3564--3575. PMLR, 2021.

\bibitem{gneiting2007probabilistic}
T.~Gneiting, F.~Balabdaoui, and A.~E. Raftery.
\newblock Probabilistic forecasts, calibration and sharpness.
\newblock {\em Journal of the Royal Statistical Society: Series B (Statistical
  Methodology)}, 69(2):243--268, 2007.

\bibitem{goodfellow2014generative}
I.~Goodfellow, J.~Pouget-Abadie, M.~Mirza, B.~Xu, D.~Warde-Farley, S.~Ozair,
  A.~Courville, and Y.~Bengio.
\newblock Generative adversarial nets.
\newblock {\em Advances in Neural Information Processing Systems}, 27, 2014.

\bibitem{gretton2012kernel}
A.~Gretton, K.~M. Borgwardt, M.~J. Rasch, B.~Sch{\"o}lkopf, and A.~Smola.
\newblock A kernel two-sample test.
\newblock {\em The Journal of Machine Learning Research}, 13(1):723--773, 2012.

\bibitem{gulrajani2017improved}
I.~Gulrajani, F.~Ahmed, M.~Arjovsky, V.~Dumoulin, and A.~C. Courville.
\newblock Improved training of wasserstein gans.
\newblock {\em Advances in Neural Information Processing Systems}, 30, 2017.

\bibitem{guo2017calibration}
C.~Guo, G.~Pleiss, Y.~Sun, and K.~Q. Weinberger.
\newblock On calibration of modern neural networks.
\newblock In {\em International Conference on Machine Learning}, pages
  1321--1330. PMLR, 2017.

\bibitem{hendrycks2016baseline}
D.~Hendrycks and K.~Gimpel.
\newblock A baseline for detecting misclassified and out-of-distribution
  examples in neural networks.
\newblock {\em arXiv preprint arXiv:1610.02136}, 2016.

\bibitem{hernandez2015probabilistic}
J.~M. Hern{\'a}ndez-Lobato and R.~Adams.
\newblock Probabilistic backpropagation for scalable learning of bayesian
  neural networks.
\newblock In {\em International Conference on Machine Learning}, pages
  1861--1869, 2015.

\bibitem{heusel2017gans}
M.~Heusel, H.~Ramsauer, T.~Unterthiner, B.~Nessler, and S.~Hochreiter.
\newblock Gans trained by a two time-scale update rule converge to a local nash
  equilibrium.
\newblock {\em Advances in Neural Information Processing Systems}, 30, 2017.

\bibitem{holmes2012fast}
M.~P. Holmes, A.~G. Gray, and C.~L. Isbell.
\newblock Fast nonparametric conditional density estimation.
\newblock {\em arXiv preprint arXiv:1206.5278}, 2012.

\bibitem{hullermeier2021aleatoric}
E.~H{\"u}llermeier and W.~Waegeman.
\newblock Aleatoric and epistemic uncertainty in machine learning: An
  introduction to concepts and methods.
\newblock {\em Machine Learning}, 110(3):457--506, 2021.

\bibitem{izbicki2017photo}
R.~Izbicki, A.~B. Lee, and P.~E. Freeman.
\newblock Photo-$ z $ estimation: An example of nonparametric conditional
  density estimation under selection bias.
\newblock {\em The Annals of Applied Statistics}, 11(2):698--724, 2017.

\bibitem{kallenberg1997foundations}
O.~Kallenberg and O.~Kallenberg.
\newblock {\em Foundations of modern probability}, volume~2.
\newblock Springer, 1997.

\bibitem{kendall2017uncertainties}
A.~Kendall and Y.~Gal.
\newblock What uncertainties do we need in bayesian deep learning for computer
  vision?
\newblock {\em Advances in Neural Information Processing Systems}, 30, 2017.

\bibitem{kingma2014adam}
D.~P. Kingma and J.~Ba.
\newblock Adam: A method for stochastic optimization.
\newblock {\em arXiv preprint arXiv:1412.6980}, 2014.

\bibitem{kuleshov2018accurate}
V.~Kuleshov, N.~Fenner, and S.~Ermon.
\newblock Accurate uncertainties for deep learning using calibrated regression.
\newblock In {\em International conference on machine learning}, pages
  2796--2804. PMLR, 2018.

\bibitem{kuleshov2015calibrated}
V.~Kuleshov and P.~S. Liang.
\newblock Calibrated structured prediction.
\newblock {\em Advances in Neural Information Processing Systems}, 28, 2015.

\bibitem{kull2015novel}
M.~Kull and P.~Flach.
\newblock Novel decompositions of proper scoring rules for classification:
  Score adjustment as precursor to calibration.
\newblock In {\em Joint European Conference on Machine Learning and Knowledge
  Discovery in Databases}, pages 68--85. Springer, 2015.

\bibitem{kull2019beyond}
M.~Kull, M.~Perello~Nieto, M.~K{\"a}ngsepp, T.~Silva~Filho, H.~Song, and
  P.~Flach.
\newblock Beyond temperature scaling: Obtaining well-calibrated multi-class
  probabilities with dirichlet calibration.
\newblock {\em Advances in Neural Information Processing Systems}, 32, 2019.

\bibitem{kumar2019verified}
A.~Kumar, P.~S. Liang, and T.~Ma.
\newblock Verified uncertainty calibration.
\newblock {\em Advances in Neural Information Processing Systems}, 32, 2019.

\bibitem{lakshminarayanan2017simple}
B.~Lakshminarayanan, A.~Pritzel, and C.~Blundell.
\newblock Simple and scalable predictive uncertainty estimation using deep
  ensembles.
\newblock {\em Advances in Neural Information Processing Systems}, 30, 2017.

\bibitem{lecun1998gradient}
Y.~LeCun, L.~Bottou, Y.~Bengio, and P.~Haffner.
\newblock Gradient-based learning applied to document recognition.
\newblock {\em Proceedings of the IEEE}, 86(11):2278--2324, 1998.

\bibitem{li2017mmd}
C.-L. Li, W.-C. Chang, Y.~Cheng, Y.~Yang, and B.~P{\'o}czos.
\newblock Mmd gan: Towards deeper understanding of moment matching network.
\newblock {\em Advances in Neural Information Processing Systems}, 30, 2017.

\bibitem{li2015generative}
Y.~Li, K.~Swersky, and R.~Zemel.
\newblock Generative moment matching networks.
\newblock In {\em International Conference on Machine Learning}, pages
  1718--1727. PMLR, 2015.

\bibitem{liu2020learning}
F.~Liu, W.~Xu, J.~Lu, G.~Zhang, A.~Gretton, and D.~J. Sutherland.
\newblock Learning deep kernels for non-parametric two-sample tests.
\newblock In {\em International Conference on Machine Learning}, pages
  6316--6326. PMLR, 2020.

\bibitem{lucic2018gans}
M.~Lucic, K.~Kurach, M.~Michalski, S.~Gelly, and O.~Bousquet.
\newblock Are gans created equal? a large-scale study.
\newblock {\em Advances in neural information processing systems}, 31, 2018.

\bibitem{minderer2021revisiting}
M.~Minderer, J.~Djolonga, R.~Romijnders, F.~Hubis, X.~Zhai, N.~Houlsby,
  D.~Tran, and M.~Lucic.
\newblock Revisiting the calibration of modern neural networks.
\newblock {\em Advances in Neural Information Processing Systems}, 34, 2021.

\bibitem{mirza2014conditional}
M.~Mirza and S.~Osindero.
\newblock Conditional generative adversarial nets.
\newblock {\em arXiv preprint arXiv:1411.1784}, 2014.

\bibitem{muandet2017kernel}
K.~Muandet, K.~Fukumizu, B.~Sriperumbudur, B.~Sch{\"o}lkopf, et~al.
\newblock Kernel mean embedding of distributions: A review and beyond.
\newblock {\em Foundations and Trends{\textregistered} in Machine Learning},
  10(1-2):1--141, 2017.

\bibitem{niculescu2005predicting}
A.~Niculescu-Mizil and R.~Caruana.
\newblock Predicting good probabilities with supervised learning.
\newblock In {\em Proceedings of the 22nd International Conference on Machine
  Learning}, pages 625--632, 2005.

\bibitem{nix1994estimating}
D.~A. Nix and A.~S. Weigend.
\newblock Estimating the mean and variance of the target probability
  distribution.
\newblock In {\em Proceedings of 1994 ieee international conference on neural
  networks (ICNN'94)}, volume~1, pages 55--60. IEEE, 1994.

\bibitem{nowozin2016f}
S.~Nowozin, B.~Cseke, and R.~Tomioka.
\newblock f-gan: Training generative neural samplers using variational
  divergence minimization.
\newblock {\em Advances in Neural Information Processing Systems}, 29, 2016.

\bibitem{park2020measure}
J.~Park and K.~Muandet.
\newblock A measure-theoretic approach to kernel conditional mean embeddings.
\newblock {\em Advances in Neural Information Processing Systems},
  33:21247--21259, 2020.

\bibitem{pearce2018high}
T.~Pearce, M.~Zaki, A.~Brintrup, and A.~Neely.
\newblock High-quality prediction intervals for deep learning: A
  distribution-free, ensembled approach.
\newblock {\em arXiv preprint arXiv:1802.07167}, 2018.

\bibitem{pospisil2018rfcde}
T.~Pospisil and A.~B. Lee.
\newblock Rfcde: Random forests for conditional density estimation.
\newblock {\em arXiv preprint arXiv:1804.05753}, 2018.

\bibitem{pospisil2019f}
T.~Pospisil and A.~B. Lee.
\newblock (f) rfcde: Random forests for conditional density estimation and
  functional data.
\newblock {\em arXiv preprint arXiv:1906.07177}, 2019.

\bibitem{ren2016conditional}
Y.~Ren, J.~Zhu, J.~Li, and Y.~Luo.
\newblock Conditional generative moment-matching networks.
\newblock {\em Advances in Neural Information Processing Systems}, 29, 2016.

\bibitem{rumelhart1985learning}
D.~E. Rumelhart, G.~E. Hinton, and R.~J. Williams.
\newblock Learning internal representations by error propagation.
\newblock Technical report, California Univ San Diego La Jolla Inst for
  Cognitive Science, 1985.

\bibitem{song2019distribution}
H.~Song, T.~Diethe, M.~Kull, and P.~Flach.
\newblock Distribution calibration for regression.
\newblock In {\em International Conference on Machine Learning}, pages
  5897--5906. PMLR, 2019.

\bibitem{song2013kernel}
L.~Song, K.~Fukumizu, and A.~Gretton.
\newblock Kernel embeddings of conditional distributions: A unified kernel
  framework for nonparametric inference in graphical models.
\newblock {\em IEEE Signal Processing Magazine}, 30(4):98--111, 2013.

\bibitem{song2009hilbert}
L.~Song, J.~Huang, A.~Smola, and K.~Fukumizu.
\newblock Hilbert space embeddings of conditional distributions with
  applications to dynamical systems.
\newblock In {\em Proceedings of the 26th Annual International Conference on
  Machine Learning}, pages 961--968, 2009.

\bibitem{sugiyama2010conditional}
M.~Sugiyama, I.~Takeuchi, T.~Suzuki, T.~Kanamori, H.~Hachiya, and D.~Okanohara.
\newblock Conditional density estimation via least-squares density ratio
  estimation.
\newblock In {\em Proceedings of the Thirteenth International Conference on
  Artificial Intelligence and Statistics}, pages 781--788. JMLR Workshop and
  Conference Proceedings, 2010.

\bibitem{wilson2016deep}
A.~G. Wilson, Z.~Hu, R.~Salakhutdinov, and E.~P. Xing.
\newblock Deep kernel learning.
\newblock In {\em Artificial Intelligence and Statistics}, pages 370--378.
  PMLR, 2016.

\end{thebibliography}

\end{document}